\documentclass[11pt]{article}
\usepackage[margin=1in]{geometry}
\usepackage{amsmath,amssymb,amsthm}
\usepackage{algorithm,algpseudocode}
\usepackage{booktabs}
\usepackage{hyperref}
\usepackage{listings}
\usepackage{tikz}

\newtheorem{proposition}{Proposition}

\title{Loom: A Scalable Analytical Neural Computer Architecture}
\author{Mehmet Kerem Turkcan\\
Columbia University\\
{\tt\small mkt2126@columbia.edu}
}

\date{}

\begin{document}
\maketitle

\begin{abstract}
We present Loom, a computer architecture that executes programs compiled from C
inside a looped transformer whose weights are derived analytically. The
architecture implements a 22-opcode instruction set in 8 transformer layers. Each forward
pass executes one instruction; the model is applied iteratively until the
program counter reaches zero. The full machine state resides in a single
tensor $X \in \mathbb{R}^{d \times n}$ of fixed size, and every step
has fixed cost for fixed $d$ and $n$, independent of program length or
execution history. The default
configuration uses $d = 155$ and $n = 1024$, yielding 4.7 million parameters
and 928 instruction slots. A compact configuration at $d = 146$ and $n = 512$
suffices for a 9$\times$9 Sudoku solver (284 instructions). The weights are program-independent: programs
live in the state tensor, and the same fixed-weight model executes any compiled
program. We make Loom source code publicly available at \url{https://github.com/mkturkcan/Loom}.
\end{abstract}

\section{Introduction}

Giannou et al.~\cite{giannou2023} showed that a looped transformer with
analytically constructed weights can execute the SUBLEQ instruction, a
subtract-and-branch-if-nonpositive operation that is Turing complete on its own.
Their construction uses 10 transformer layers for this single instruction.

We extend this result from 1 opcode in 10 layers to 22 opcodes in 8 layers.
The core technique is \emph{opcode-as-operand-routing}: all 22 operations are
mapped to operand preparation for a shared subtract core, so the architecture
needs only one arithmetic layer regardless of how many opcodes it supports. A
single-layer direct subtraction method replaces the classical three-layer
approach. Branch-flag computation and program counter (PC) increment are
merged into one layer.
Scratchpad correction is folded into the indirect-access layer. Together, these
changes reduce the layer count while expanding the instruction set.

A key instruction set architecture (ISA) optimization is the STORE instruction,
which provides indirect memory write and complements the existing LOAD for
indirect read. Without STORE, variable-index
array writes require an $O(m)$ dispatch tree with one branch per array element.
Adding STORE reduces the 9$\times$9 Sudoku solver from 1{,}085 to 284
instructions, enabling it to fit in the compact $146 \times 512$ configuration
instead of the larger $164 \times 2048$.

A C compiler translates a subset of C to the ISA. The compiler is implemented
in both Python and JavaScript; the JavaScript version enables in-browser
compilation and execution with no server. We demonstrate the architecture on
programs including a sorting visualizer, a playable Snake game (210
instructions), a raycasting game (531 instructions on $155 \times 1024$),
and a 9$\times$9 Sudoku solver (284 instructions on $146 \times 512$).
Beyond standalone execution, a Loom-style architecture could embed
deterministic algorithmic logic such as state machines, decision trees and constraint
checks directly into the attention layers of a perception model, guaranteeing
correct execution of safety-critical rules without relying on learned
approximations.

\paragraph{Contributions.}
\begin{enumerate}
  \item A 22-opcode ISA (21 extended opcodes plus SUBLEQ) realized in 8
    analytically weighted transformer layers.
  \item Opcode-as-operand-routing: all operations reduce to operand preparation
    for a shared subtract core, avoiding per-opcode execution layers.
  \item Single-layer direct subtraction via a 6-threshold-per-bit rectified
    linear unit (ReLU) pattern
    with built-in carry propagation.
  \item Indirect memory write (STORE) via L2 address rewriting: the
    feedforward network (FFN)
    overwrites the write-address field in the scratchpad with a resolved pointer,
    redirecting L5's write attention to the dereferenced target. This reduces
    compiled code size by up to 75\% for programs with variable-index array writes.
  \item A C compiler targeting the ISA, with both offline and in-browser
    execution paths.
  \item Scale-independent design: the same 8-layer architecture, compiler, and
    ISA work at $146 \times 512$, $155 \times 1024$, and $164 \times 2048$.
    Programs are recompiled for each configuration but the source code is
    unchanged.
  \item Field-programmable gate array (FPGA) synthesis on a Xilinx Alveo U200
    with argmax attention (no
    $n \times n$ matrix).
\end{enumerate}

\section{Related Work}\label{sec:related}

\paragraph{The programmable-computer framework.}
Giannou et al.~\cite{giannou2023} showed that a fixed-depth transformer can
execute SUBLEQ with analytically constructed weights. Subsequent work has
extended this framework in several directions.
Liang et al.~\cite{liang2025} show that a 23-layer ReLU-MLP can also serve as a
programmable computer without attention.
Back de Luca and Fountoulakis~\cite{backdeluca2024graphs} extend the original
construction with a dual-memory SUBLEQ variant for graph algorithms.
Huang et al.~\cite{huang2025hypergraph} generalize looped transformer algorithmic
reasoning to hypergraphs.
Xu and Sato~\cite{xu2025expressive} establish formal approximation rates for
looped transformers and introduce timestep encoding.
All of these remain single-instruction. Our work adds 21 extended opcodes
(22 total with SUBLEQ) and reduces the layer count.

\paragraph{Exact neural computation.}
Back de Luca et al.~\cite{backdeluca2025} prove that two-layer networks can
learn to execute binary addition, multiplication, and
Subtract-and-Branch-if-Negative exactly, using the NTK framework.
Saldyt and Kambhampati~\cite{saldyt2024} augment LLaMA3 with memory,
registers, and basic operations, compiling algorithms into differentiable
libraries.

Tzamos~\cite{tzamos2026} implements a WebAssembly interpreter inside a
7-layer autoregressive transformer with $d = 36$ and 18 two-dimensional
attention heads. Programs are supplied as input tokens and executed as a growing
trace at $O\!\left(\log t\right)$ per step via HullKVCache. Their approach is
architecturally distinct: execution logic lives in the weights, programs are
input tokens, and state grows with execution time. In Loom, the weights are
program-independent, programs live in the state tensor, and state is fixed.

\paragraph{Analytical weight construction.}
Tracr~\cite{lindner2023} compiles RASP programs~\cite{weiss2021} to
transformer weights for interpretability research.
ALTA~\cite{shaw2024} extends RASP with loops and compiles to Universal
Transformers.
Zhai et al.~\cite{zhai2025} show that trained transformers can perform
compilation.
Zhang et al.~\cite{zhang2026} extract executable programs from trained
transformer weights, working in the reverse direction.

\paragraph{Transformer Turing completeness.}
Schuurmans et al.~\cite{schuurmans2024} show that autoregressive decoding
realizes universal computation via Lag systems.
Li and Wang prove Turing completeness for constant-parameter
transformers~\cite{li2025constantbit} and construct efficient TM simulations
with sparse attention~\cite{li2025efficient}.
Jiang et al.~\cite{jiang2025softmax} settle the question for softmax attention.
These results establish theoretical universality; our work demonstrates a
concrete, executable architecture.

\paragraph{Program execution and simulation.}
Hou et al.~\cite{hou2025} construct RASP programs that simulate arbitrary
Turing machines by decomposing algorithms into TM steps.
La Malfa et al.~\cite{lamalfa2025} study LLMs simulating code execution as a
reasoning proxy.

\section{Architecture}\label{sec:arch}

\subsection{State Matrix}

The computer operates on a state matrix $X \in \mathbb{R}^{d \times n}$. The
mutable data entries (memory values, PC bits, scratchpad registers) are bipolar,
taking values in $\{-1, +1\}$. Two metadata regions use fixed non-bipolar
values: the indicator row is $+1$ in scratchpad columns and $0$ elsewhere, and
column~0's position encoding is all zeros; see Section~\ref{sec:state-layout}.
L8's error correction operates only on the persistent bipolar state (memory
and PC) and leaves these fixed entries unchanged. Each column represents a location:
columns $0$ through $s - 1$ form a scratchpad for internal routing
($s = 32$ in all reported configurations), columns $s$
through $s + m - 1$ hold data memory, and columns $s + m$ through $n - 1$ hold
instructions. The rows are partitioned into functional regions described in
Section~\ref{sec:state-layout}. One forward pass through the 8 layers executes
one instruction. The program terminates when the program counter reaches
column~0.

\subsection{Attention Mechanism}\label{sec:attention}

Each layer consists of multi-head attention followed by a feedforward network,
both with residual connections:
\begin{align}
  A &= X + \sum_{h=1}^{H} V_h X \cdot \mathrm{softmax}\!\left(\lambda \cdot X^\top K_h^\top Q_h X\right) \label{eq:attn} \\
  Y &= A + W_2 \cdot \mathrm{ReLU}\!\left(W_1 A + b_1\right) + b_2 \label{eq:ffn}
\end{align}
where $Q_h, K_h \in \mathbb{R}^{r_Q \times d}$ are the query and key
projection matrices, $V_h \in \mathbb{R}^{d \times d}$ is the value matrix,
$W_1 \in \mathbb{R}^{r_{\mathrm{FFN}} \times d}$, $W_2 \in \mathbb{R}^{d
\times r_{\mathrm{FFN}}}$, and $b_1 \in \mathbb{R}^{r_{\mathrm{FFN}} \times
n}$, $b_2 \in \mathbb{R}^{d \times n}$ are biases. All weight matrices are set
analytically; none are learned.

The softmax is applied column-wise, the standard convention: each column of the
attention matrix sums to one, so each target column independently selects which
source columns to read from.
Two aspects differ from a typical language-model transformer.
First, there is no causal mask: every column can attend to every other
column, enabling the program counter in the scratchpad to attend to any
instruction column. Second, the softmax temperature $\lambda = 10$ sharpens the
attention distribution, approximating a hard argmax that selects the single
best-matching column.

All attention heads in Loom use symmetric $Q = K$, so the score between columns
$i$ and $j$ is the inner product $\left(Q X_i\right)^\top \left(Q X_j\right)$.
This symmetry is deliberate: it allows a
column to match against another by extracting the same features from both, such
as matching a program counter against a position encoding.

Four of the eight layers have $Q_h = K_h = V_h = 0$ for all heads and perform
pure FFN computation without attention. These are L4, L6, L7, and L8.

\subsection{Bipolar Encoding}

All values use bipolar encoding: binary 0 maps to $-1$ and binary 1 maps to
$+1$. An $N$-bit signed integer $v$ with two's complement representation
$b_0 b_1 \cdots b_{N-1}$ is stored as
\[
  \hat{v} = \left(2b_0 - 1,\; \ldots,\; 2b_{N-1} - 1\right)
  \in \{-1,+1\}^N,
\]
where $b_0$ is the most significant bit. Addresses are encoded identically
using $\ell = \log_2 n$ bits.

Bipolar encoding is essential for the ReLU-based gating throughout the
architecture. A gate that must fire when bit $b_i = 1$ simply reads the
bipolar value $\hat{b}_i$ with a positive weight:
\[
  \mathrm{ReLU}\!\left(\hat{b}_i - \epsilon\right)
  \;\text{is nonzero only when}\; \hat{b}_i = +1.
\]
Multi-bit pattern matching sums the bipolar values with a threshold equal to
the pattern length minus a small offset.

\section{State Layout}\label{sec:state-layout}

The $d$ rows of the state matrix are partitioned as follows. Let $\ell =
\log_2 n$ and $N$ denote the integer precision in bits.

\begin{table}[h]
\centering
\begin{tabular}{lll}
\toprule
Region & Rows & Size \\
\midrule
Commands & $0$ to $3\ell - 1$ & $3\ell$ \\
Memory & $3\ell$ to $3\ell + N - 1$ & $N$ \\
Scratchpad & $3\ell + N$ to $6\ell + 3N - 1$ & $3\ell + 2N$ \\
Program counter & $6\ell + 3N$ to $7\ell + 3N - 1$ & $\ell$ \\
Position encoding & $7\ell + 3N$ to $8\ell + 3N - 1$ & $\ell$ \\
Buffer & $8\ell + 3N$ to $8\ell + 7N + \ell - 1$ & $4N + \ell$ \\
Address tags & $8\ell + 7N + \ell$ to $9\ell + 8N - 1$ & $N$ \\
Indicator & $9\ell + 8N$ & $1$ \\
\bottomrule
\end{tabular}
\caption{Row layout of the state matrix. $d = 9\ell + 8N + 1$.
For $\ell = 10$, $N = 8$: $d = 155$.}
\end{table}

The \textbf{commands} region stores the three operand addresses $a, b, c$ of
the current instruction, each encoded in $\ell$ bipolar bits. The
\textbf{scratchpad} contains two $N$-bit ALU operand registers,
$\mathit{scr\_sub}$ and $\mathit{scr\_min}$, followed by the decoded operand
addresses $\mathit{addr\_a}$, $\mathit{addr\_b}$, $\mathit{addr\_c}$, each of
width $\ell$. The \textbf{buffer} holds three fetched memory operands
$\mathit{buf\_a}$, $\mathit{buf\_b}$, $\mathit{buf\_c}$ of width $N$ each,
temporary storage $\mathit{find\_temp}$ of width $N$ for the FIND instruction,
and $\mathit{load\_temp}$ of width $\ell$ for the LOAD instruction. FIND and
LOAD require separate temporary buffers because both L3 attention heads fire on
every step; shared storage would cause cross-head interference. The
\textbf{address tags} store each memory column's own index in $N$ bipolar bits,
enabling content-addressable lookup by the FIND instruction. The
\textbf{indicator} row equals $+1$ in scratchpad columns $0, \ldots, s - 1$ and
$0$ elsewhere, gating FFN rows that must operate only on the scratchpad.

The \textbf{position encoding} rows store the bipolar representation of each
column's index, serving as the target for attention-based address matching.
Column~0 has its position encoding set to all zeros rather than the bipolar
encoding of zero. This prevents L1's symmetric attention from matching
column~0's position against the program counter when the PC points elsewhere,
since the all-zero vector contributes no correlation in the $Q = K$ dot
product.

\section{Pipeline}\label{sec:pipeline}

The eight transformer layers form the following pipeline.

\begin{table}[h]
\centering
\begin{tabular}{cllc}
\toprule
Layer & Stage & Function & Attn.\ heads \\
\midrule
L1 & Fetch & Read instruction at PC & 1 \\
L2 & Decode + Read & Read operands, opcode-gated routing & 3 \\
L3 & Indirect access & LOAD/FIND, scratchpad snap & 2 \\
L4 & Execute & Direct borrow-chain subtraction & 0 \\
L5 & Write & Store result to memory & 2 \\
L6 & Branch + PC$+$1 & Condition flag, PC increment & 0 \\
L7 & Branch select & Choose branch target or PC$+$1 & 0 \\
L8 & Error correction & Snap persistent state (memory, PC) to $\pm 1$ & 0 \\
\bottomrule
\end{tabular}
\caption{The 8-layer pipeline. Layers 4, 6, 7, and 8 have $Q = K = V = 0$ and
use FFN-only computation.}
\end{table}

\subsection{L1: Instruction Fetch}

L1 reads the instruction at the current program counter. Its single attention
head uses symmetric $Q = K$ that extract both the program counter and the
position encoding:
\[
  Q[i, \mathit{idx\_pc} + i] = 1, \quad Q[i, \mathit{idx\_pos} + i] = 1
\]
for $i = 0, \ldots, \ell - 1$. For the scratchpad column, $Q X_0$ yields the
PC bits plus the all-zero position encoding of column~0, giving just the PC.
For an instruction column $j$, $Q X_j$ yields the zero PC plus the position
encoding of $j$. The score $\left(Q X_0\right)^\top \left(Q X_j\right)$ is thus
$\mathrm{PC} \cdot \mathrm{pos}_j$, maximized when column $j$'s position
matches the PC.

The value matrix copies the instruction encoding from the matched column into
the buffer. The FFN transfers these buffer values into the scratchpad command
registers using sign-detection gates. Each command bit gets two ReLU rows that
detect whether the buffer value is positive or negative and write $\pm 1$ to
the corresponding scratchpad position. The buffer values arrive scaled by
approximately $0.5$ from attention normalization, so the $W_1$ weights use a
factor of $4.0$ and the $W_2$ weights use $\pm 0.5$. The FFN also clears stale
values. Total FFN size: $6 \times 3\ell$ rows.

\subsection{L2: Operand Read and Opcode Decode}\label{sec:l2}

Three attention heads read $\mathrm{col}[a]$, $\mathrm{col}[b]$, and
$\mathrm{col}[c]$ into $\mathit{buf\_a}$, $\mathit{buf\_b}$, and
$\mathit{buf\_c}$ respectively. Each head uses symmetric $Q = K$ matching the
corresponding operand address against position encodings. The value matrix of
Head~1 additionally copies the memory value into $\mathit{scr\_sub}$, and the
value matrix of Head~2 additionally copies into $\mathit{scr\_min}$, pre-loading
default operand values that the FFN then corrects per opcode.

The FFN performs opcode-gated routing. Two execution modes exist:

\paragraph{SUBLEQ mode.} When $a \geq s$, i.e.\ $a \geq 32$ in the default
configuration, the instruction is a classical SUBLEQ. Mode detection examines
the most significant bit of the $a$ address that distinguishes the scratchpad
range from memory. In bipolar encoding, SUBLEQ mode fires when
$\mathit{addr\_a}[\ell - 6] = +1$, i.e.\ the bit representing value $32$ is
set. Each gating row reads this single bit with weight $1.0$ and bias $-0.5$,
so the ReLU fires only when the bit is $+1$. The FFN copies $\mathit{buf\_a}$
into $\mathit{scr\_sub}$ using $4N$ rows gated on this condition. A separate
block of $4N$ rows copies $\mathit{buf\_b}$ into $\mathit{scr\_min}$; this
copy is ungated and runs in both modes.

\paragraph{Extended mode.} When $a < s$, i.e.\ $a < 32$, the value of $a$
encodes one of 21 extended opcodes (HALT through STORE, numbered 0 to 20).
Together with SUBLEQ, this gives 22 distinct operations. Each opcode $k$ has
dedicated FFN rows that fire only when the $\ell$-bit bipolar address pattern
matches. A gating row for opcode $k$ sets its $W_1$ weights to the bipolar
encoding of $k$ on the address dimensions with bias $-\ell + 0.5$, causing the
ReLU to fire when all $\ell$ bits match.

After L2, the scratchpad contains two $N$-bit operands whose subtraction in L4
produces the correct result for any of the 22 opcodes. The FFN allocates $80N$
rows. For $N = 8$, this is 640 rows.

\subsection{L3: Indirect Access and Scratchpad Correction}

Two attention heads implement indirect memory access.

\paragraph{Head 1: LOAD.} L2 copies the pointer value $\mathrm{col}[c]$
into $\mathit{load\_temp}$, converting the $N$-bit memory value to an $\ell$-bit
column address. Head~1 uses symmetric $Q = K$ on $\mathit{load\_temp}$ and
$\mathit{pos\_enc}$, matching the pointer against column positions. The value
matrix copies the matched column's memory value into $\mathit{scr\_min}$ with
weight $2.0$, overriding the default value placed there by L2.

\paragraph{Head 2: FIND.} L2 copies the search value $\mathrm{col}[c]$ into
$\mathit{find\_temp}$. Head~2 uses symmetric $Q = K$ on $\mathit{find\_temp}$
and the memory rows, matching against memory contents. The value matrix copies
the matched column's address tag into $\mathit{scr\_min}$, yielding the index
of the matching entry. The compiler assumes FIND targets arrays with unique
values at runtime; it does not verify this property. Duplicate matches produce
a weighted mixture of tags with no defined semantics; see
Section~\ref{sec:opcodes}.

\paragraph{Scratchpad snap.} The FFN clears temporary buffers and snaps
$\mathit{scr\_sub}$ and $\mathit{scr\_min}$ to exact bipolar values using six
ReLU thresholds per bit. The 3-head attention routing in L2 produces outputs
that deviate further from $\pm 1$ than the single-instruction case, making this
correction necessary for numerical stability before the ALU stage.

\subsection{L4: Direct Subtraction}\label{sec:l4}

L4 has no attention. It computes $\mathit{scr\_min} \leftarrow \mathit{scr\_min}
- \mathit{scr\_sub}$ using a single-layer borrow-chain subtraction. This is the
sole arithmetic layer; all 22 opcodes share it.

For each output bit position $i$ ($0 \leq i \leq N-1$, most significant bit
(MSB) first), define the
weighted partial difference
\[
  D_i \;=\; \sum_{j=i}^{N-1} \bigl(\hat{a}_j - \hat{b}_j\bigr) \cdot 2^{N-2-j}
\]
where $\hat{a}_j, \hat{b}_j \in \{-1,+1\}$ are the bipolar values of
$\mathit{scr\_min}[j]$ and $\mathit{scr\_sub}[j]$. The weight $2^{N-2-j}$
is half the positional value of bit $j$; the halving absorbs the bipolar
scale factor so that the sum ranges over integers when inputs are exact.

Six ReLU units, arranged in three pairs with alternating input sign, extract
the result bit. Let $P = 2^{N-1-i}$. The six pre-activations are:
\begin{align*}
  z_1 &= D_i + P + 1,    & z_2 &= D_i + P, \\
  z_3 &= -D_i,           & z_4 &= -D_i - 1, \\
  z_5 &= D_i - P + 1,    & z_6 &= D_i - P.
\end{align*}
The result bit is
\[
  r_i = 2\bigl[\mathrm{ReLU}(z_1) - \mathrm{ReLU}(z_2) +
  \mathrm{ReLU}(z_3) - \mathrm{ReLU}(z_4) +
  \mathrm{ReLU}(z_5) - \mathrm{ReLU}(z_6)\bigr] - 3.
\]

\begin{proposition}\label{prop:l4}
For all $N$-bit two's complement inputs, $r_i \in \{-1,+1\}$ and equals the
bipolar encoding of bit $i$ of $a - b$.
\end{proposition}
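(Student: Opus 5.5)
The plan is to reduce everything to integer arithmetic on the low-order bits and then read off the six-ReLU sum as a sum of three step functions. First I rewrite $D_i$ in binary terms. With $\hat a_j = 2a_j - 1$ and $\hat b_j = 2b_j - 1$ we have $\hat a_j - \hat b_j = 2(a_j - b_j)$. Hence
\[
D_i = \sum_{j=i}^{N-1} (a_j - b_j)\, 2^{N-1-j} = A_i - B_i ,
\]
where $A_i, B_i$ are the unsigned values of the low-order bit strings $a_i a_{i+1}\cdots a_{N-1}$ and $b_i\cdots b_{N-1}$ (recall bit $0$ is the MSB). These strings have $N-i$ bits, so with $P = 2^{N-1-i}$ we get $A_i, B_i \in [0, 2P-1]$. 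Therefore $D_i$ is an integer in $[-(2P-1),\, 2P-1]$.

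Next I characterize the target bit. In two's complement, $a-b$ is computed modulo $2^N$. Bits $i,\dots,N-1$ of $a-b$ are the low $N-i$ bits of $(A_i - B_i) \bmod 2^{N-i} = D_i \bmod 2P$, so they depend only on the low parts. Bit $i$ is the top bit of this residue, hence it equals $1$ iff $D_i \bmod 2P \ge P$. Given the range of $D_i$, this means bit $i$ is $1$ exactly when $D_i \in [-P, -1]$ or $D_i \in [P, 2P-1]$, and $0$ when $D_i \in [-(2P-1), -P-1]$ or $D_i \in [0, P-1]$.

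Then I evaluate the ReLU combination using the elementary identity that for integer $x$, $\mathrm{ReLU}(x+1) - \mathrm{ReLU}(x) = \mathbf{1}[x \ge 0]$. The three pairs give the following indicators:
\begin{itemize}
\item $\mathrm{ReLU}(z_1) - \mathrm{ReLU}(z_2) = \mathbf{1}[D_i \ge -P]$, taking $x = D_i + P$;
\item $\mathrm{ReLU}(z_3) - \mathrm{ReLU}(z_4) = \mathbf{1}[D_i \le -1]$, taking $x = -D_i - 1$;
\item $\mathrm{ReLU}(z_5) - \mathrm{ReLU}(z_6) = \mathbf{1}[D_i \ge P]$, taking $x = D_i - P$.
\end{itemize}
A four-case check over the intervals $D_i < -P$, $-P \le D_i \le -1$, $0 \le D_i \le P-1$ and $D_i \ge P$ gives bracket sums $1, 2, 1, 2$. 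So $r_i = 2(\cdot) - 3$ takes the values $-1, +1, -1, +1$, matching the characterization of bit $i$ above. In particular $r_i \in \{-1,+1\}$ always.

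The main point needing care is the second step: justifying that bit $i$ of the wrapped $N$-bit difference is determined by $D_i \bmod 2P$. I would argue this by writing $a - b \equiv (\text{high part}) \cdot 2P + (A_i - B_i) \pmod{2^N}$, where the high part is a multiple of $2P = 2^{N-i}$ and so cannot affect bits $i$ through $N-1$. Signedness of $a,b$ is irrelevant, because the MSB enters $D_0$ with the same nonnegative weight and everything is taken modulo $2^N$. The hypothesis that the inputs are exact bipolar values is what makes $D_i$ an integer, so that the step identity applies exactly.
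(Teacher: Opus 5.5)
Your proof is correct and follows the paper's argument: it uses the same identity $\mathrm{ReLU}(x+1)-\mathrm{ReLU}(x)=\mathbf{1}[x\ge 0]$, the same three indicators, and the same four-interval case split. You also derive $D_i = A_i - B_i$ and observe that bit $i$ of $a-b$ is the top bit of $D_i \bmod 2P$, which justifies both the range of $D_i$ and the final matching step; the paper only asserts these, backed by a sanity check and exhaustive testing at $N=8$.
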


\textit{Proof.} For exact bipolar inputs, $D_i$ is an integer in
$[-(2P-1),\; 2P-1]$. We use the identity
$\mathrm{ReLU}(t+1) - \mathrm{ReLU}(t) = \mathbf{1}[t \geq 0]$
for integer $t$. Applying it to the three ReLU pairs:
\begin{align*}
  \mathrm{ReLU}(z_1) - \mathrm{ReLU}(z_2)
    &= \mathbf{1}[D_i \geq -P], \\
  \mathrm{ReLU}(z_3) - \mathrm{ReLU}(z_4)
    &= \mathbf{1}[D_i \leq -1], \\
  \mathrm{ReLU}(z_5) - \mathrm{ReLU}(z_6)
    &= \mathbf{1}[D_i \geq P].
\end{align*}
Therefore
\[
  r_i = 2\bigl(\mathbf{1}[D_i \geq -P]
  + \mathbf{1}[D_i \leq -1]
  + \mathbf{1}[D_i \geq P]\bigr) - 3,
\]
which yields four intervals over the range of $D_i$:
\begin{center}
\begin{tabular}{ll}
$D_i \in [-(2P-1),\, -P-1]$: & $r_i = 2(0+1+0)-3 = -1$, \\
$D_i \in [-P,\, -1]$:        & $r_i = 2(1+1+0)-3 = +1$, \\
$D_i \in [0,\, P-1]$:        & $r_i = 2(1+0+0)-3 = -1$, \\
$D_i \in [P,\, 2P-1]$:       & $r_i = 2(1+0+1)-3 = +1$.
\end{tabular}
\end{center}
These four intervals partition the full range of $D_i$, and the resulting
$r_i \in \{-1,+1\}$ matches the bipolar encoding of bit $i$ of $a - b$
in two's complement. As a sanity check: when $a = b$, $D_i = 0$, which
falls in the third interval, giving $r_i = -1$, the bipolar encoding of
bit~0. This is correct for $a - b = 0$. Correctness has also been verified
exhaustively for $N = 8$ across all $2^{16}$ input pairs.

The classical approach requires three layers: one to flip the subtrahend bits,
one to increment for two's complement, and one to add. The 6-threshold pattern
folds all three operations into a single layer by observing that the
borrow-chain structure of subtraction maps directly to a piecewise-linear
function of the partial bit sums, which ReLU can represent exactly with six
thresholds per bit.

Total FFN: $6N$ rows for the borrow chain plus $4N$ rows for clearing the
operand registers, giving $10N$ rows.

\subsection{L5: Memory Write}

L5 uses two attention heads. Head~1 matches the $b$-operand address against
position encodings and writes $\mathit{scr\_min}$ to the matched memory column.
This head handles all opcodes.

Head~2 matches the $c$-operand address and writes $\mathit{buf\_c}$ to the
matched column. For non-SWAP opcodes, $\mathit{buf\_c}$ is cleared at memory
columns by indicator-gated rows in L2, so the Head~2 write reduces to an
identity. For SWAP, L2 loads $\mathit{buf\_b}$ into $\mathit{buf\_c}$, so
Head~2 writes $\mathrm{col}[b]$ to position $c$ while Head~1 simultaneously
writes $\mathrm{col}[c]$ to position $b$.

FFN: $10N$ rows.

\subsection{L6--L7: Conditional Branching}

Both layers have no attention.

\paragraph{L6.} Two independent FFN computations share a single layer.
The first computes a branch flag from the subtraction result and the opcode.
The second computes $\mathrm{PC} + 1$ using the same 6-threshold pattern as L4.
These computations read disjoint state and can share a layer.

The branch flag accumulates contributions from multiple row groups:
the sign bit of the subtraction result, a zero check over all result bits, a
SUBLEQ mode gate, unconditional jump rows for JMP and HALT, conditional
jump rows for JZ and JNZ, a CMP zero-correction row, and per-opcode suppress
rows for non-branching opcodes.

\paragraph{L7.} Selects between the branch target $c$ and
$\mathrm{PC} + 1$ based on the sign of the flag. Positive flag selects $c$;
non-positive selects $\mathrm{PC} + 1$.

\subsection{L8: Error Correction}

L8 has no attention. The FFN applies error correction to the persistent
bipolar state (memory and PC), clamping values back to $\{-1, +1\}$
using a 6-threshold snap pattern with deadzone $\epsilon = 0.1$. Any value
within $[\pm 1 - \epsilon, \pm 1 + \epsilon]$ is snapped to the nearest bipolar
value. The indicator row and column~0's zero position encoding are not modified
by L8, since their rows are excluded from the correction: PC rows are gated
by the indicator, and memory rows operate only at the correct columns.
This corrects accumulated floating-point drift from attention normalization and
FFN rounding across the preceding seven layers.

\section{Layer Reduction from the Baseline}\label{sec:reduction}

Giannou et al.~\cite{giannou2023} use 10 layers for SUBLEQ: 1~for fetch,
2~for operand read, 3~for subtraction, 1~for write, and 3~for branching.
Three techniques reduce this to 8 layers while expanding to 22 opcodes:

\begin{enumerate}
  \item \textbf{Direct borrow-chain subtraction.} The three-layer pipeline
    of bit-flip, increment, and addition is replaced by a single-layer
    6-threshold pattern. Three ReLU pairs per bit, with alternating input
    sign and thresholds at $\pm P$ and $0/-1$ (Section~\ref{sec:l4}),
    jointly implement complement and carry propagation. This saves 2 layers.
  \item \textbf{Merged branch and PC increment.} The branch condition and
    $\mathrm{PC} + 1$ computation read disjoint state and share one FFN.
    This saves 1 layer.
  \item \textbf{Folded scratchpad correction.} The snap-to-bipolar correction,
    originally a separate layer, is merged into L3's FFN after LOAD/FIND
    attention. This saves 1 layer.
\end{enumerate}

\section{Instruction Set}\label{sec:opcodes}

Each instruction occupies one column and consists of three $\ell$-bit addresses
$a, b, c$. When $a \geq s$, the instruction is SUBLEQ. When $a < s$, the value
of $a$ selects one of 21 extended opcodes (numbered 0 through 20). Together
with SUBLEQ, this gives 22 distinct operations. In the default configuration
with $s = 32$, SUBLEQ mode is detected by a single bipolar bit:
$\mathit{addr\_a}[\ell - 6] = +1$, the bit representing value 32.

\paragraph{Address convention.} Two address spaces coexist. Instruction
operands $a, b, c$ and the PC are \emph{absolute column indices} in
$\{0, \ldots, n-1\}$. Memory slots are identified by \emph{logical indices}
$x \in \{0, \ldots, m-1\}$, occupying column $s + x$. We write
\[
  \mathrm{col}[u] := \text{$N$-bit contents of absolute column } u,
  \qquad
  M[x] := \mathrm{col}[s + x].
\]
Instruction operands address columns directly: $\mathrm{col}[b]$,
$\mathrm{col}[c]$. LOAD, STORE, and FIND consume or produce logical memory
indices; the $+s$ translation to absolute column addresses is performed by
L2's FFN (for LOAD's $\mathit{load\_temp}$ and STORE's $\mathit{addr\_b}$
rewrite). The compiler emits absolute addresses for a given $(s, m, n)$.
Indirect pointers consumed by LOAD, STORE, and FIND are interpreted as raw
unsigned $N$-bit memory indices, even though arithmetic values use signed
two's-complement semantics. With $N = 8$, this permits $m$ up to $255$.

\subsection{SUBLEQ: Subtract and Branch if Nonpositive}

\textbf{Semantics.}
$\mathrm{col}[b] \leftarrow \mathrm{col}[b] - \mathrm{col}[a]$;
if $\mathrm{col}[b] \leq 0$ then $\mathrm{PC} \leftarrow c$, else
$\mathrm{PC} \leftarrow \mathrm{PC} + 1$.

\textbf{L2 routing.}
$\mathit{scr\_sub} = \mathit{buf\_a}$, $\mathit{scr\_min} = \mathit{buf\_b}$.
Uses $4N$ gated FFN rows. The L6 branch flag naturally produces the SUBLEQ
branch condition.

\subsection{HALT}

\textbf{Semantics.} $\mathrm{PC} \leftarrow 0$.

\textbf{L2 routing.} $\mathit{scr\_sub} = 0$; 1 row. L6 sets an unconditional
branch flag, and L7 selects $c = 0$.

\subsection{MOV: Move}

\textbf{Semantics.} $\mathrm{col}[b] \leftarrow \mathrm{col}[c]$.

\textbf{L2 routing.} $\mathit{scr\_sub} = 0$,
$\mathit{scr\_min} = \mathit{buf\_c}$. Corrects the default $\mathit{buf\_b}$
to $\mathit{buf\_c}$ using $2N$ per-bit correction rows plus 1 row to zero the
subtrahend. Total: $2N + 1$ rows.

\subsection{INC and DEC}

\textbf{Semantics.} $\mathrm{col}[b] \leftarrow \mathrm{col}[b] \pm 1$.

\textbf{L2 routing.} $\mathit{scr\_min}$ retains $\mathit{buf\_b}$.
$\mathit{scr\_sub}$ is set to the bipolar encoding of $-1$ for INC or $+1$ for
DEC. L4 computes $\mathrm{col}[b] - (-1) = \mathrm{col}[b] + 1$ or
$\mathrm{col}[b] - 1$. Each: 1
FFN row.

\subsection{ADD}

\textbf{Semantics.} $\mathrm{col}[b] \leftarrow \mathrm{col}[b] + \mathrm{col}[c]$.

\textbf{L2 routing.} $\mathit{scr\_min} = \mathit{buf\_b}$,
$\mathit{scr\_sub} = -\mathit{buf\_c}$. L4 computes
$\mathrm{col}[b] - (-\mathrm{col}[c]) = \mathrm{col}[b] + \mathrm{col}[c]$. The two's complement negation of
$\mathit{buf\_c}$ requires $2N$ rows for the one's complement and $2N$ rows
for the carry chain. Total: $4N$ rows.

\subsection{SUB}

\textbf{Semantics.} $\mathrm{col}[b] \leftarrow \mathrm{col}[b] - \mathrm{col}[c]$.

\textbf{L2 routing.} $\mathit{scr\_sub} = \mathit{buf\_c}$; $2N$ rows.

\subsection{SHL and SHR}

\textbf{Semantics.} $\mathrm{col}[b] \leftarrow \mathrm{col}[b] \ll 1$ and
$\mathrm{col}[b] \leftarrow \mathrm{col}[b] \gg 1$ respectively. SHR is arithmetic and preserves
the sign bit.

\textbf{L2 routing.} $\mathit{scr\_sub} = 0$. The minuend is corrected from
$\mathit{buf\_b}$ to the shifted value by per-bit delta rows.
SHL: $2N$ rows. SHR: $2N - 1$ rows.

\subsection{AND, OR, XOR}

\textbf{Semantics.} $\mathrm{col}[b] \leftarrow \mathrm{col}[b] \odot \mathrm{col}[c]$.

\textbf{L2 routing.} All three set $\mathit{scr\_sub} = 0$ and correct
$\mathit{scr\_min}$ from $\mathit{buf\_b}$ to the bitwise result. L4 with
$\mathit{scr\_sub} = 0$ passes $\mathit{scr\_min}$ through unchanged.

In bipolar encoding, $a \wedge b = +1$ iff both are $+1$, so AND corrects when
$\mathit{buf\_b} = +1$ and $\mathit{buf\_c} = -1$, requiring $N$ rows. OR
corrects when $\mathit{buf\_b} = -1$ and $\mathit{buf\_c} = +1$, also $N$ rows.
XOR requires corrections in both directions: $2N$ rows. Each opcode adds 1 row
to zero $\mathit{scr\_sub}$.

\subsection{JMP, JZ, JNZ, CMP}

\textbf{Semantics.}
JMP: $\mathrm{PC} \leftarrow c$.
JZ: if $\mathrm{col}[b] = 0$ then $\mathrm{PC} \leftarrow c$.
JNZ: if $\mathrm{col}[b] \neq 0$ then $\mathrm{PC} \leftarrow c$.
CMP: if $\mathrm{col}[b] < 0$ then $\mathrm{PC} \leftarrow c$.

\textbf{L2 routing.} All set $\mathit{scr\_sub} = 0$; 1 row each. Branch logic
is entirely in L6. JNZ uses a weight of $10.0$ on the opcode gate to dominate
the zero-check contribution. CMP adds a zero-correction row that converts
branch-if-nonpositive to branch-if-negative.

\subsection{LOAD: Indirect Read}

\textbf{Semantics.} $\mathrm{col}[b] \leftarrow M[\mathrm{col}[c]]$.

\textbf{Precondition.} $\mathrm{col}[c]$ must be a valid memory index in
$\{0, \ldots, m-1\}$. Out-of-range pointers cause the attention to match an
unintended column, producing undefined results.

\textbf{L2 routing.} $\mathit{scr\_sub} = 0$; the FFN converts the pointer
value $\mathit{buf\_c}$ to the $\ell$-bit position encoding of the absolute
column $s + \mathrm{col}[c]$ and writes it into $\mathit{load\_temp}$. L3 Head~1 then
matches $\mathit{load\_temp}$ against column position encodings and copies the
dereferenced value into $\mathit{scr\_min}$.

\subsection{FIND: Content-Addressable Search}

\textbf{Semantics.} $\mathrm{col}[b] \leftarrow i$ where
$M[i] = \mathrm{col}[c]$ and $0 \leq i < m$.

\textbf{Precondition.} The search value $\mathrm{col}[c]$ must occur exactly
once among memory slots $M[0]$ through $M[m-1]$. If no exact match exists, the
attention selects the nearest bipolar match or a weighted mixture of near
matches, producing a tag with no defined semantics. If multiple exact matches
exist, the result is the weighted average of their address tags, not a valid
index. The current compiler assumes FIND targets arrays whose values are
unique at runtime; it does not verify this property.

\textbf{L2 routing.} $\mathit{scr\_sub} = 0$; $\mathit{buf\_c}$ is copied into
$\mathit{find\_temp}$. L3 Head~2 matches $\mathit{find\_temp}$ against memory
contents and copies the matched column's address tag into $\mathit{scr\_min}$.

\subsection{SWAP}

\textbf{Semantics.} Atomically exchanges $\mathrm{col}[b]$ and $\mathrm{col}[c]$.

\textbf{L2 routing.} Three operations: zero $\mathit{scr\_sub}$, correct
$\mathit{scr\_min}$ from $\mathit{buf\_b}$ to $\mathit{buf\_c}$, and copy
$\mathit{buf\_b}$ into $\mathit{buf\_c}$ for the L5 second write head. Total:
$4N + 1$ rows. L5 Head~1 writes $\mathrm{col}[c]$ to position $b$; Head~2
simultaneously writes $\mathrm{col}[b]$ to position $c$.

\subsection{CMOV: Conditional Move}

\textbf{Semantics.} If $\mathrm{col}[b] < 0$ then $\mathrm{col}[b] \leftarrow \mathrm{col}[c]$.

\textbf{L2 routing.} $\mathit{scr\_sub} = 0$. Correction from $\mathit{buf\_b}$
to $\mathit{buf\_c}$ is gated on $\mathit{buf\_b}[0] = +1$, which indicates a
negative value since bit~0 is the MSB. The MSB serves as both the sign gate
and a destination bit, so the $W_1$ weight for bit~0 is $4.0$ instead of $2.0$.
Total: $2N$ rows.

\subsection{MULACC: Multiply-Accumulate Step}\label{sec:mulacc}

\textbf{Semantics.}
$\mathrm{col}[b] \leftarrow \left(\mathrm{col}[b] \ll 1\right) + \begin{cases} \mathrm{col}[c] & \text{if } \mathrm{MSB}(\mathrm{col}[b]) = 1 \\ 0 & \text{otherwise} \end{cases}$

One step of the shift-and-add multiplication algorithm. Applying MULACC $N$
times with $\mathrm{col}[b]$ initialized to the multiplier and $\mathrm{col}[c]$
holding the multiplicand computes the product for non-negative multipliers.

\textbf{L2 routing.} Combines the SHL and ADD patterns with conditional gating
on the sign bit $\mathit{buf\_b}[0]$: 1 row to clear $\mathit{scr\_sub}$,
$2N - 1$ rows for the SHL correction, and $3N$ rows for the conditional
negation of $\mathit{buf\_c}$ into $\mathit{scr\_sub}$. Total: $5N$ rows.

For signed operands, the compiler emits a wrapper that computes
$\mathrm{XOR}\!\left(a, b\right)$ for the sign, takes absolute values,
multiplies, and conditionally negates. The $\mathrm{mul}$ builtin emits 22
instructions and executes in 24 steps for $N = 8$.

\subsection{STORE: Indirect Memory Write}\label{sec:store}

$\mathrm{STORE}(b, c)$: $M[\mathrm{col}[c]] \leftarrow \mathrm{col}[b]$.
The instruction complements LOAD, which provides indirect read, with an
indirect write.

\textbf{Precondition.} $\mathrm{col}[c]$ must be a valid memory index in
$\{0, \ldots, m-1\}$. Out-of-range pointers redirect L5's write attention to
an unintended column.

The implementation reuses L5's existing write-attention mechanism. The key
insight is that L5 Head~1 writes to the column addressed by
$\mathit{addr\_b}$ in the scratchpad. By overwriting $\mathit{addr\_b}$ with
the position encoding of the dereferenced pointer address during L2's FFN, L5
writes to $M[\mathrm{col}[c]]$ instead of $\mathrm{col}[b]$.

The L2 FFN rows for STORE:
\begin{enumerate}
  \item Set $\mathit{scr\_sub} = 0$, ensuring a pass-through in L4: 1 row.
  \item Clear old $\mathit{addr\_b}$: detect and subtract each bit's current
    value, gated by STORE opcode and indicator. $2\ell$ rows.
  \item Write new $\mathit{addr\_b}$: convert the raw unsigned pointer
    $\mathrm{col}[c]$ from $\mathit{buf\_c}$ to the $\ell$-bit position
    encoding of the absolute memory column $s + \mathrm{col}[c]$. The
    implementation uses a fixed-width constant-add network whose exact row
    count depends on the bit overlap between the $N$-bit pointer and the
    $\ell$-bit column address for a given $s$.
\end{enumerate}

The always-copy mechanism sets $\mathit{scr\_min} = \mathrm{col}[b]$, the
value to write, so no additional rows are needed for the source operand. L3
is not involved; the pointer resolution happens entirely via L2's address
rewriting. L4 passes through: $\mathit{scr\_min} - 0 = \mathit{scr\_min}$.
L5 writes the value to the dereferenced address.

Without STORE, the compiler must generate an $O(m)$ dispatch tree for each
variable-index array write, one branch per array element. For an 81-element
array (as in Sudoku), each write costs $\sim$400 instructions. Adding STORE
replaces this with 4 instructions (compute pointer, STORE), reducing the
9$\times$9 Sudoku solver from 1{,}085 to 284 instructions.

\section{L2 FFN Row Budget}\label{sec:budget}

\begin{table}[h]
\centering
\begin{tabular}{lr}
\toprule
Group & Rows \\
\midrule
SUBLEQ gated copy & $4N$ \\
$\mathit{buf\_b} \to \mathit{scr\_min}$ & $4N$ \\
$\mathit{buf\_c}$ clear & $2N$ \\
INC, DEC, JZ, JNZ, JMP, HALT, CMP & $7$ \\
MOV & $2N + 1$ \\
ADD & $4N$ \\
SUB & $2N$ \\
SHL, SHR & $2N$, $2N - 1$ \\
AND, OR & $N + 1$ each \\
XOR & $2N + 1$ \\
LOAD, FIND & varies \\
SWAP & $4N + 1$ \\
CMOV & $2N$ \\
MULACC & $5N$ \\
STORE & varies \\
\midrule
Total & $\sim 340$ of $640$ \\
\bottomrule
\end{tabular}
\caption{L2 FFN row usage for $N = 8$, $\ell = 10$.}
\end{table}

\section{Model Properties}

\begin{proposition}
The model dimension is $d = 9\ell + 8N + 1$. For $n = 1024, N = 8$: $d = 155$
with approximately $4.7 \times 10^6$ parameters.
\end{proposition}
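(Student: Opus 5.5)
The dimension formula follows by summing the row-region sizes in the State Layout table. The commands region has $3\ell$ rows and the memory region has $N$. The scratchpad has $2N + 3\ell$ rows: $\mathit{scr\_sub}$, $\mathit{scr\_min}$, and the three address registers. The program counter and position encoding have $\ell$ rows each. The buffer has $4N + \ell$ rows: $\mathit{buf\_a}$, $\mathit{buf\_b}$, $\mathit{buf\_c}$, $\mathit{find\_temp}$, and $\mathit{load\_temp}$. The address tags have $N$ rows and the indicator has $1$.

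Collecting terms, the $\ell$-coefficient is $3+3+1+1+1 = 9$. The $N$-coefficient is $1+2+4+1 = 8$. This gives $d = 9\ell + 8N + 1$. I will also check that the stated row intervals are contiguous, each starting where the previous one ends. With $n = 1024$ we have $\ell = \log_2 n = 10$, so $N = 8$ gives $d = 90 + 64 + 1 = 155$.

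The parameter count is the harder part, because it depends on implementation conventions the statement leaves implicit. These are which matrices are counted, whether the column-indexed biases $b_1 \in \mathbb{R}^{r_{\mathrm{FFN}} \times n}$ and $b_2 \in \mathbb{R}^{d \times n}$ contribute $n$ entries per row, and the per-layer FFN widths. I would tabulate per layer as follows.

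\begin{itemize}
\item \textbf{Attention.} Each head contributes $Q_h = K_h$, which is $r_Q \times d$ and counted once or twice by convention, plus $V_h$, which is $d \times d = 24025$. There are $1+3+2+2 = 8$ heads in L1, L2, L3 and L5, so the $V$ matrices alone give about $1.9 \times 10^5$ parameters.
\item \textbf{FFN weights.} These are $2 d\, r_{\mathrm{FFN}}$ per layer. The widths given in the paper are L1: $18\ell = 180$, L2: $80N = 640$, L4: $10N = 80$, and L5: $10N = 80$. L3, L6, L7 and L8 are estimated from their described row groups: $6$ thresholds per bit for the snaps, and $\ell$-wide PC logic.
\item \textbf{FFN biases.} The $b_2$ term contributes $d\,n = 155 \cdot 1024 \approx 1.6 \times 10^5$ per layer. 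Over eight layers this is about $1.27 \times 10^6$.
\item \textbf{Hidden biases.} The $b_1$ term contributes $r_{\mathrm{FFN}}\, n$ per layer. For L2 alone this is $640 \cdot 1024 \approx 6.6 \times 10^5$.
\end{itemize}

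Summing, the dominant terms are the $n$-indexed biases, so the total should be of order $(8d + \sum r_{\mathrm{FFN}})\,n$ plus smaller weight terms. I expect this to land near $4.7 \times 10^6$ once the L3 and L6--L8 widths are filled in.

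The main obstacle is pinning down the FFN widths that the paper does not state explicitly, and whether dense $Q, K, V$ of full size $d \times d$ are counted. I would first try the allocation in which every FFN is padded to a common width, which makes the total a clean closed form. I would then adjust to match the exact figure by reading the widths off the released code. Since the statement says ``approximately,'' the proof should only need an order-of-magnitude accounting that is consistent with these dimensions.
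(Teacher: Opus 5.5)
Your derivation of $d = 9\ell + 8N + 1$ is the paper's own argument: the formula is the sum of the region sizes in the state-layout table, whose caption records it, and $\ell = \log_2 1024 = 10$ with $N = 8$ gives $d = 155$. The paper gives no derivation of the $4.7 \times 10^6$ figure (it is simply reported from the implementation), so deferring to the released code is the only route available; note, though, that the convention you would try first rules itself out, since padding all eight FFNs to the L2 width $640$ with column-indexed $b_1, b_2$ gives $8(2\cdot 640\cdot 155 + 640\cdot 1024 + 155\cdot 1024) \approx 8.1 \times 10^6$ from the FFNs alone, and your proposal as written therefore does not establish the stated count.
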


The weight matrices are extremely sparse. In the $155 \times 1024$
configuration, 99.9\% of all weight entries are zero, and the nonzero entries
take only 27 distinct values. The sparsity and discrete value structure are
direct consequences of the analytical construction: every nonzero weight
corresponds to a specific routing, gating, or threshold operation.

\subsection{Scaling}

The ISA is independent of the substrate dimensions. The same 8-layer design
and compiler work across three configurations; binaries are recompiled for
each $(s, m, n)$:
\begin{itemize}
  \item $146 \times 512$ ($\ell = 9$, $m = 160$, 320 instruction slots):
    compact configuration for programs using STORE.
  \item $155 \times 1024$ ($\ell = 10$, $m = 64$, 928 instruction slots):
    default for most demos.
  \item $164 \times 2048$ ($\ell = 11$, $m = 224$, 1{,}792 instruction slots):
    large-memory programs.
\end{itemize}
The dimension increase reflects the larger $\ell$ needed for wider position
encodings. Addresses in compiled programs are absolute column indices (e.g.\
memory address 0 is column $s$, instruction 0 is column $s + m$). The compiler
emits addresses relative to a given $s$ and $m$. When these parameters change
across configurations, the program must be recompiled; but the source code, the
compiler, and the 8-layer weight construction are identical across all three
configurations. In this sense, the \emph{architecture} is portable; the
compiled \emph{binaries} are not.

\section{C Compiler}\label{sec:compiler}

A compiler translates a subset of C to the extended ISA. The supported language
includes integer variables with $N$-bit signed arithmetic, comparisons, logical
operators, \texttt{if}/\texttt{else}, \texttt{while}, \texttt{for}, arrays with
constant size, and single-level function inlining. Builtin functions
$\mathrm{abs}$, $\mathrm{min}$, $\mathrm{max}$, $\mathrm{mul}$, and
$\mathrm{swap}$ generate inline instruction sequences.

The front end consists of a lexer and a recursive-descent parser producing an
AST. The code generator emits instruction triples $\left(\mathit{op}, b,
c\right)$ and resolves labels in a final pass. The compiler performs constant
folding and comparison strength reduction.

For $n = 1024$ with $s = 32$ and $m = 64$, the instruction budget is
$n - s - m = 928$ slots and the data budget is $m = 64$ slots. For
$n = 512$ with $m = 160$, the budget is 320 instruction slots. Variable-index
array writes compile to a single STORE instruction; without STORE, they
require an $O(m)$ dispatch tree of branches.

\section{Deployment}

The model is exported to Open Neural Network Exchange (ONNX) format.
Each forward pass executes one
instruction, and the model is applied in a loop until $\mathrm{PC} = 0$. The
ONNX model runs entirely client-side via ONNX Runtime Web, using WebGPU when
available and falling back to WebAssembly. An optimized export applies
onnxsim\footnote{\url{https://github.com/daquexian/onnx-simplifier}} graph simplification and precomputes $K^\top Q$ per
attention head, fusing two matrix multiplications into one. The $146 \times 512$
optimized model is 7.4 MB; the $155 \times 1024$ model is approximately 16 MB;
the $164 \times 2048$ model is approximately 29 MB.

An ISA interpreter that executes the compiled instructions directly, bypassing
the transformer, serves as a verification tool: both execution paths produce
identical results on all tested programs. The interpreter also provides a fast
execution mode for interactive applications. A sparse argmax engine in
JavaScript exploits the 99.9\% weight sparsity and argmax attention to execute
the transformer step using only the ${\sim}8{,}000$ nonzero weight entries,
avoiding dense matrix operations. This engine passes the same 111-test
validation suite as the ONNX path.

\subsection{FPGA Implementation}

The transformer has been synthesized and executed on a Xilinx Alveo U200 FPGA
with 6{,}840 DSP48E2 slices and 44~MB of on-chip block RAM (BRAM) and
ultra RAM (URAM). The FPGA kernel
uses the $155 \times 1024$ configuration with two key optimizations:

\paragraph{Argmax attention.} The softmax attention is replaced with a
streaming top-2 argmax per key row. For each key column $i$, the kernel scans
all query columns $j$ and tracks the two highest-scoring entries. If the gap
between them is less than 1.0 (a tie), the V contribution is split 50/50;
otherwise the winner receives full weight. This eliminates the $n \times n$
attention matrix entirely: the $155 \times 1024$ design uses 4~MB of on-chip
memory instead of the 16+~MB required by softmax. The argmax produces
bit-identical results to softmax on all tested programs because the $\lambda = 10$
temperature makes the softmax effectively one-hot except at the designed 50/50
tie points. With argmax, the kernel requires no \texttt{hls::expf} hardware,
no normalization pass, and no $n \times n$ storage.

\paragraph{Halt detection.} The kernel checks the program counter between
transformer steps and exits early when $\mathrm{PC} = 0$, avoiding wasted
computation on halted programs.

\paragraph{Resource usage.} The synthesized design uses approximately 160 of
960 URAM blocks (state, weights, and compute buffers), 224 of 2{,}160 BRAM
blocks, 38K LUTs, and 24K flip-flops. The design meets timing at 300~MHz. No
DSP slices are used in the current floating-point implementation; all
arithmetic is mapped to LUT logic.

\paragraph{Results.} The INC test ($\mathrm{mem}[0] = 5 \to 6$,
$\mathrm{PC} = 192 \to 193$) passes on real hardware. The kernel processes
columns sequentially, yielding approximately 0.3~seconds per transformer step.
For comparison, the same model runs at $\sim$10~ms/step on an NVIDIA RTX 4080
via ONNX Runtime WebGPU. The sequential column processing and all-LUT floating
point explain the 30$\times$ gap. The kernel is program-independent: weights
encode the universal ISA, and different programs are loaded via DDR without
FPGA recompilation.

\section{Demonstrations}

We evaluate the architecture on four programs of increasing complexity.

\paragraph{Sorting.} Bubble sort on an 8-element array. Each comparison and
swap is a sequence of transformer forward passes. The program compiles to
approximately 50 instructions.

\paragraph{Snake.} A playable Snake game where movement, collision detection,
and food spawning are computed by the transformer. The program compiles to 210
instructions and executes approximately 84 steps per game tick on the
$155 \times 1024$ model.

\paragraph{Raycasting.} A Wolfenstein-style raycasting game with wall rendering,
sprite enemies, and combat mechanics. The program compiles to 531 instructions
and executes approximately 500 steps per tick on the $155 \times 1024$ model.

\paragraph{Sudoku.} A 9$\times$9 Sudoku solver using iterative backtracking.
With the STORE opcode, the program compiles to 284 instructions on the
$146 \times 512$ model with 160 memory slots. Without STORE, the same program
requires 1{,}085 instructions (75\% of which are dispatch trees for
variable-index array writes) and the $164 \times 2048$ model.

All programs are written in C, compiled to the ISA, and executed as iterated
matrix multiplications through the fixed-weight transformer.

\subsection{Validation}

Correctness is tested at three levels:

\paragraph{Opcode tests (42 tests).} We ran 42 opcode-level unit tests covering
all 22 opcodes. Arithmetic opcodes are tested on positive, negative, zero,
overflow, and boundary cases; branch opcodes on taken and not-taken paths; and
indirect-memory opcodes on valid-pointer cases. Multi-step SUBLEQ countdown
loops are included. All 42 tests pass.

\paragraph{Cross-head writeback tests (19 tests).} We ran 19 tests covering
SWAP and other multi-write or cross-head interactions, including adjacent and
non-adjacent operands, self-swap, double-swap identity, and combined
operations. All 19 tests pass.

\paragraph{Compiled C tests (50 tests).} We ran 50 end-to-end compiled-program
tests including Fibonacci, GCD by subtraction, array minimum, nested
conditionals, variable-index LOAD/STORE round-trips, and multi-iteration loops
(up to 1000 ISA steps). All 50 tests pass.

An ISA interpreter that executes compiled instructions directly, bypassing the
transformer, serves as an independent verification oracle: both execution paths
produce identical results on all tested programs. The interpreter also provides
a fast execution mode for interactive demos.

\section{Conclusion}

Loom executes a 22-opcode ISA inside an 8-layer transformer with analytically
derived weights. The central insight is that all operations can be mapped to
operand preparation for a shared subtract core, so the architecture needs only
one arithmetic layer regardless of the opcode count. Direct borrow-chain
subtraction reduces this arithmetic stage to a single layer. The STORE
instruction demonstrates that ISA design has cascading effects on the physical
realization: one opcode reduced compiled code by 75\%, shrunk the transformer
from $164 \times 2048$ to $146 \times 512$, and reduced the ONNX model from
29 MB to 7.4 MB. The architecture has been verified on three software paths and one hardware
platform: an ONNX model in the browser via WebGPU or WebAssembly, a sparse JavaScript
argmax engine exploiting 99.9\% weight sparsity, and an ISA interpreter
that bypasses the transformer for fast verification. All three pass the
same 111-test validation suite. Across that suite, the top-2 argmax rule
matched the softmax path exactly. The
result is a fixed-size programmable computer that runs C programs as iterated
matrix multiplications, from browser to silicon. More broadly, embedding
analytically constructed computational layers within learned models could
provide provably correct algorithmic reasoning for safety-critical
applications such as real-time situational awareness in urban environments,
where deterministic enforcement of traffic rules, pedestrian safety zones,
and anomaly detection thresholds must not depend on learned approximations.

\section*{Acknowledgements}
This work was supported by the NSF Engineering Research
Center for Smart Streetscapes under Award EEC-2133516. FPGA experiments were conducted on the NSF PAWR COSMOS wireless testbed in West Harlem, NYC.

\bibliographystyle{plain}
\bibliography{refs}

\end{document}